\documentclass{article}
\usepackage{iclr2026_conference,times}

\usepackage{amsmath,amsfonts,bm}

\def\eqref#1{equation~\ref{#1}}

\def\1{\bm{1}}

\DeclareMathAlphabet{\mathsfit}{\encodingdefault}{\sfdefault}{m}{sl}
\SetMathAlphabet{\mathsfit}{bold}{\encodingdefault}{\sfdefault}{bx}{n}

\newcommand{\Var}{\mathrm{Var}}

\newcommand{\Cov}{\mathrm{Cov}}

\usepackage{hyperref}
\hypersetup{
  colorlinks=true,
  linkcolor=blue!45!black,
  citecolor=blue!45!black,
  urlcolor=blue!45!black
}
\usepackage{url}
\usepackage{amsthm,booktabs,graphicx,float,microtype,tikz,etoolbox}
\usetikzlibrary{arrows.meta,positioning}
\definecolor{ink}{HTML}{1C2833}
\definecolor{lane}{HTML}{EAF2F8}
\definecolor{lane2}{HTML}{FDEDEC}
\definecolor{accent}{HTML}{1A5276}
\definecolor{good}{HTML}{196F3D}
\definecolor{warn}{HTML}{7B241C}

\iclrfinalcopy
\makeatletter
\patchcmd{\@maketitle}{\rule{\z@}{24pt}}{\rule{\z@}{6pt}}{}{}
\patchcmd{\@maketitle}{\rule{\z@}{24pt}}{\rule{\z@}{6pt}}{}{}
\patchcmd{\@maketitle}{\rule{\z@}{24pt}}{\rule{\z@}{6pt}}{}{}
\patchcmd{\@maketitle}{\vskip 0.3in minus 0.1in}{\vskip 0.14in}{}{}
\makeatother

\newtheorem{theorem}{Theorem}
\newtheorem{proposition}[theorem]{Proposition}
\newtheorem{corollary}[theorem]{Corollary}

\theoremstyle{definition}

\newcommand{\Ex}{\mathbb{E}}
\newcommand{\tr}{\operatorname{tr}}
\newcommand{\sg}{\operatorname{sg}}
\newcommand{\cF}{\mathcal{F}}

\title{One-Step Next-Latent Prediction\\ Is Not a World Model}

\author{Shitong Wang \And Zhongang Cai \And Yuzhou Hong}

\begin{document}
\maketitle
\lhead{Preprint}

\begin{abstract}
Next-latent prediction fits a map from the current embedding to the next one.
LeNEPA carries this objective to time series, replacing the stop-gradient of next-embedding prediction with the isotropy penalty of LeJEPA.
A world model is a transition kernel that can be rolled out.
The one-step regression identifies a conditional mean, and a mean is a kernel only in special cases.
For a linear-Gaussian Markov latent, the mean transition and the innovation covariance are fixed by the one-step problem, and the open-loop squared error at horizon $K$ equals the trace of the sum of the pushed-forward innovation covariances.
That error grows with $K$ after the one-step fit is exact.
If the conditional mean is nonlinear, composing it is not the multi-step conditional mean.
If the observation is a non-injective function of a Markov state, a memoryless one-step map does not determine future observations, while a short window can.
An isotropy penalty is a function of the embedding marginal, so its partial derivative in the transition weights is zero.
On a scalar autoregression with coefficient $0.9$, the one-step mean squared error is $0.998$ and the $16$-step open-loop error is $5.10$.
On a hidden rotation, an eight-step window reaches $16$-step error $0.056$, while the current scalar alone reaches $0.778$.
Raising the isotropy weight from $0.1$ to $10$ leaves eight-step latent error inside $[0.78,0.85]$ on three seeds.
\end{abstract}

\section{Introduction}

A next-latent loss watches one step.
The input is the true latent, the target is the next latent, and the fitted object is a point.
A world model is a distribution over the next state, iterated on its own outputs.
\citet{ha20180122} roll that distribution inside a controller.
\citet{hafner20201603} train a policy on the imagined trajectory, and \citet{hafner20244104} keep the rollout as the model across domains.
\citet{xu2025nextembeddingpredictionmakesstrong} train the point-valued map for images.
\citet{chemeris2026lenepa} carry it to time series, using the isotropy penalty of \citet{balestriero20258544} where next-embedding prediction used a stop-gradient.
Figure~\ref{fig:flow} puts the two objects on one diagram.
The top row is what the training loss sees.
The bottom row is what imagination has to sample.
In the linear-Gaussian case they share a conditional mean and an innovation covariance, and they use those parameters differently.

\begin{figure}[H]
\centering
\begin{tikzpicture}[
  font=\scriptsize,
  box/.style={draw=accent, rounded corners=1.6pt, fill=white, line width=0.55pt,
              minimum height=4.6ex, inner xsep=5pt, inner ysep=2pt, align=center, text=ink},
  arr/.style={-{Stealth[length=1.55mm]}, line width=0.55pt, draw=ink!75}
]
  \fill[lane, rounded corners=2pt] (-0.05,0.15) rectangle (13.05,2.55);
  \fill[lane2, rounded corners=2pt] (-0.05,-2.35) rectangle (13.05,-0.08);
  \node[anchor=west, font=\scriptsize\bfseries, text=accent] at (0.18,2.22) {Teacher-forced loss};
  \node[box] (y) at (1.15,1.62) {$y_t$};
  \node[box] (z) at (3.7,1.62) {$z_t$};
  \node[box] (f) at (6.55,1.62) {$f(z_t)$};
  \node[box] (tgt) at (9.55,1.62) {$z_{t+1}$};
  \node[box, draw=warn, text=warn] (pen) at (3.7,0.58) {$\lambda\cdot\mathrm{Cov}(z)$};
  \draw[arr] (y) -- node[above, font=\tiny] {encode} (z);
  \draw[arr] (z) -- node[above, font=\tiny] {one step} (f);
  \draw[arr] (f) -- node[above, font=\tiny] {$R_1$} (tgt);
  \draw[dashed, draw=warn, line width=0.5pt] (z.south) -- (pen.north);

  \node[anchor=west, font=\scriptsize\bfseries, text=warn] at (0.18,-0.32) {Open-loop imagination};
  \node[box] (z0) at (1.15,-1.25) {$z_t$};
  \node[box] (mean) at (3.7,-1.25) {$Az_t$};
  \node[box] (bill) at (7.7,-0.78) {mean path};
  \node[box] (smp) at (7.7,-1.72) {sample path};
  \node[font=\tiny, anchor=west] at (9.55,-0.78) {error $=$ bill};
  \node[font=\tiny, anchor=west] at (9.55,-1.72) {draw $\varepsilon$ each step};
  \draw[arr] (z0) -- (mean);
  \draw[arr] (mean.east) -- (bill.west);
  \draw[arr] (mean.east) -- (smp.west);
\end{tikzpicture}
\caption{Two uses of the same one-step mean.
The loss on the top row matches a point and attaches isotropy only to the marginal of $z_t$.
The bottom row forks after $Az_t$: the mean path pays the residual bill, and the sample path puts a fresh innovation back at every step.}
\label{fig:flow}
\end{figure}

Let $z_t\in\mathbb{R}^d$ be the latent and $f$ a measurable predictor, iterated by $f^{(1)}=f$ and $f^{(K)}=f\circ f^{(K-1)}$.
The two risks in Figure~\ref{fig:flow} are
\begin{equation}
R_1(f)=\Ex\|f(z_t)-z_{t+1}\|^2,
\qquad
R_K(f)=\Ex\|f^{(K)}(z_t)-z_{t+K}\|^2.
\label{eq:risks}
\end{equation}
Teacher forcing feeds $f$ the true latent at every step, so under stationarity its error equals $R_1(f)$ at every horizon.
\citet{chemeris2026lenepa} optimize that one-step regression plus a penalty on the marginal of the embedding.
The loss never writes down $R_K$.

Whether $R_1$ determines the rollout depends on which branch of Figure~\ref{fig:modes} the process sits in.
On the linear-Gaussian branch, $z_{t+1}=Az_t+\varepsilon_{t+1}$ with $\varepsilon_{t+1}$ independent of the past, mean zero, and covariance $\Sigma$.
Then $f(z)=Az$ attains $R_1(f)=\tr(\Sigma)$, and the open-loop error is the residual bill
\begin{equation}
R_K(f)=\tr\sum_{j=0}^{K-1} A^j\Sigma (A^j)^\top.
\label{eq:bill}
\end{equation}
The bill grows with $K$ after the one-step fit is exact.
The pair $(A,\Sigma)$ is the conditional law: $z\leftarrow Az+\varepsilon$ samples it, and $z\leftarrow Az$ returns only its mean.
On the middle branch the conditional mean bends, and composing it is not the two-step conditional mean.
On the right branch the training signal is a coordinate of a Markov state.
A single frame leaves a fiber of latent explanations; a short window can cut the fiber down to a point.
The isotropy penalty sits on the top row of Figure~\ref{fig:flow}.
Its partial derivative in the transition is zero, so it cannot choose among these branches.

Section~\ref{sec:theory} proves the three branches.
Section~\ref{sec:exp} measures them on an autoregression, a hidden rotation, and a learned encoder, where the conditional mean and the observation rank are known by construction.
The contributions are the bill \eqref{eq:bill}, the closure counterexample together with the window rank condition, and the separation between the isotropy weight and the transition.

\begin{figure}[H]
\centering
\begin{tikzpicture}[
  font=\scriptsize,
  panel/.style={draw=ink!30, rounded corners=2pt, line width=0.45pt},
  arr/.style={-{Stealth[length=1.35mm]}, line width=0.5pt, draw=ink!80}
]
  \begin{scope}
    \node[panel, minimum width=4.25cm, minimum height=3.35cm] at (2.1,1.65) {};
    \node[anchor=north, font=\scriptsize\bfseries, text=accent] at (2.1,3.15) {(a) Linear Markov};
    \draw[fill=lane, draw=accent] (0.7,1.85) ellipse (0.14 and 0.22);
    \draw[fill=lane, draw=accent] (1.95,1.85) ellipse (0.26 and 0.40);
    \draw[fill=lane, draw=accent] (3.4,1.85) ellipse (0.40 and 0.58);
    \draw[arr] (0.92,1.85) -- (1.58,1.85);
    \draw[arr] (2.30,1.85) -- (2.90,1.85);
    \node[font=\tiny] at (0.7,1.15) {$K{=}1$};
    \node[font=\tiny] at (1.95,1.15) {$K{=}2$};
    \node[font=\tiny] at (3.4,1.05) {$K$};
    \node[font=\tiny] at (2.1,0.45) {bill grows with $K$};
  \end{scope}
  \begin{scope}[shift={(4.45,0)}]
    \node[panel, minimum width=4.25cm, minimum height=3.35cm] at (2.1,1.65) {};
    \node[anchor=north, font=\scriptsize\bfseries, text=accent] at (2.1,3.15) {(b) Hidden coordinate};
    \draw[accent] (1.25,1.9) circle (0.55);
    \fill[warn] (1.55,2.22) circle (0.04);
    \fill[ink!55] (1.55,1.58) circle (0.04);
    \draw[dashed, warn] (1.55,1.40) -- (1.55,2.55);
    \node[font=\tiny, text=warn] at (1.25,1.12) {same $y_t$};
    \draw[arr] (2.0,1.9) -- (2.45,1.9);
    \node[draw=good, rounded corners=1pt, fill=good!8, font=\tiny, align=center, inner sep=2pt] at (3.25,1.9) {$y_{t-1}$\\$y_t$};
    \node[font=\tiny] at (2.1,0.45) {a window pins down $z$};
  \end{scope}
  \begin{scope}[shift={(8.9,0)}]
    \node[panel, minimum width=4.15cm, minimum height=3.35cm] at (2.05,1.65) {};
    \node[anchor=north, font=\scriptsize\bfseries, text=accent] at (2.05,3.15) {(c) Isotropy};
    \draw[fill=lane, draw=accent, rotate around={28:(0.85,2.05)}] (0.85,2.05) ellipse (0.42 and 0.22);
    \node[font=\tiny, anchor=north] at (0.85,1.62) {$\mathrm{Cov}$};
    \draw[arr] (1.5,2.05) -- node[above, font=\tiny] {$\lambda$} (2.15,2.05);
    \draw[fill=lane, draw=accent] (2.85,2.05) circle (0.28);
    \node[font=\tiny, anchor=north] at (2.85,1.68) {$I$};
    \node[draw=warn, rounded corners=1pt, fill=white, font=\tiny, inner sep=2pt] at (2.05,0.7) {$\partial L/\partial A=0$};
  \end{scope}
\end{tikzpicture}
\caption{Three ways a one-step fit comes apart from a rollout.
(a)~Once $A$ and $\sigma^2$ are known, each extra open-loop step adds a pushed-forward innovation.
(b)~Two latent states share the dashed coordinate $y_t$ and disagree about the future; the window $(y_{t-1},y_t)$ separates them.
(c)~$\lambda$ rounds the marginal of $z$ toward $I$ and does not enter the transition.}
\label{fig:modes}
\end{figure}

\section{Related work}

\paragraph{Next-embedding prediction.}
\citet{xu2025nextembeddingpredictionmakesstrong} train a vision transformer to match the next patch embedding, and stop the gradient so the target cannot collapse to a constant.
The stop-gradient is the device of \citet{grill2020byol} and of \citet{chen2021simsiam}.
The match itself is a point in representation space, not a draw from a transition.
\citet{assran2023jepa} make that split explicit for images.
\citet{bardes2024vjepa} extend it to video, and \citet{bardes20224906} keep the embedding alive with a covariance penalty rather than a moving-average target.
\citet{balestriero20258544} replace those heuristics by SIGReg, an isotropy constraint aimed at the known collapse modes of joint-embedding training.
\citet{chemeris2026lenepa} import the constraint into time series: a causal transformer \citep{vaswani2017transformer}, mean squared error onto the next latent token, and no augmentation.
The same point-valued match is what masked autoencoders regress after a heavy mask \citep{he2022mae}, and what contrastive and clustering objectives shape without an explicit next token \citep{chen2020simclr,caron2021dino}.
Their reported evaluation is the representation.
The evaluation here is whether the fitted map is a rollout kernel.

\paragraph{World models.}
\citet{ha20180122} learn a latent transition and roll it out inside the controller.
\citet{hafner20194551} plan in a latent dynamics model trained from pixels.
\citet{hafner20201603} go further and train the policy on trajectories drawn from the transition, so the learned object has to be a kernel.
\citet{hafner20222193} and \citet{hafner20244104} keep that imagined trajectory as the training domain of the agent.
IRIS rolls out a transformer world model \citep{micheli20230588}, TD-MPC2 plans with one \citep{hansen20246828}, and \citet{wu20224176} run the same idea on a physical robot.
\citet{schrittwieser20208265} plan with a learned model whose outputs are values, policies, and rewards, which is again a multi-step object.
Later pixel world models ask that kernel to render: diffusion world models \citep{alonso20242399}, interactive environments \citep{bruce20245391}, and driving \citep{hu20237080}.
The newest open simulators push that rollout into interactive video.
LingBot-World holds a minute-scale horizon and distills a causal sampler so a control step returns within a second \citep{robbyant2026lingbotworld}.
Its successor trains the causal generator for an unbounded horizon and distills a real-time 720p model \citep{gao2026lingbotinfinity}.
\citet{zhang2026echowm} answer a continuous 6-DoF camera trajectory with video, environmental sound, and speech together, and the long horizon is trained autoregressively on the model's own frames.
That post-training is the open-loop gap in Figure~\ref{fig:flow} showing up in a video generator: a denoiser scored on ground-truth context drifts once the context is the sample path.
The theorems below say which training signals identify that object.
The experiments do not reimplement the agents.

\paragraph{Forecasting.}
\citet{zhou20217436} and \citet{nie20234730} train and score transformers on long forecast windows.
\citet{wu20232186} represent a series by a two-dimensional temporal variation.
\citet{zeng20223504} find that a linear map is often a strong baseline, which is what affine closure predicts: once the state is Markov and the mean is linear, extra depth does not change the conditional mean.
\citet{oord2018cpc} also score future latents, through a density ratio rather than a squared residual.
Decomposition transformers and time-series foundation models are scored on the same long window \citep{wu20223008,ansari20247815,yue20220466}.
On the generative side, \citet{ho2020ddpm} learn a one-step denoiser and \citet{song2021ddim} show that the sample the user sees is a multi-step path built from that denoiser.
The residual bill is the linear-Gaussian version of the same split.
An exact one-step map still has to inject noise at every imagined step if the draw is supposed to follow the conditional law.
Appendix~\ref{app:related} places this split among pretext tasks, autoregressive generators, diffusion samplers, and forecasting models.

\section{What one-step regression identifies}
\label{sec:theory}

\subsection{Setup}

Let $(\Omega,\cF,\mathbb{P})$ carry the latent process $(z_t)_{t\in\mathbb{Z}}$ in $\mathbb{R}^d$, with $\Ex\|z_t\|^2<\infty$.
Write $\cF_t=\sigma(z_s:s\le t)$.
The process is Markov if $\Ex[z_{t+1}\mid \cF_t]=\Ex[z_{t+1}\mid z_t]$.
The one-step regression problem is $\inf_f R_1(f)$ over measurable $f$.
Its solution is the conditional mean, by the usual $L^2$ projection: for every square-integrable $f$,
\begin{equation}
R_1(f)=R_1(f_\star)+\Ex\|f(z_t)-f_\star(z_t)\|^2,
\qquad
f_\star(z)=\Ex[z_{t+1}\mid z_t=z],
\label{eq:projection}
\end{equation}
whenever the process is Markov and stationary, so that one time index describes every step.
The minimal value equals $\Ex[\tr\Cov(z_{t+1}\mid z_t)]$.
Any noise law with the same conditional mean leaves $R_1$ unchanged.
Squared error on the next latent therefore identifies $f_\star$ and the trace of the residual covariance.
It does not identify the residual law.

Teacher forcing makes this easy to miss in a training curve.
\begin{proposition}[Teacher forcing hides the bill]
\label{prop:tf}
Suppose $(z_t)$ is strictly stationary and $f_\star(z)=\Ex[z_{t+1}\mid z_t=z]$.
Then for every $K\ge 1$,
\[
\Ex\|f_\star(z_{t+K-1})-z_{t+K}\|^2=R_1(f_\star).
\]
The open-loop risk $R_K(f_\star)$ equals this number for every $K$ only in degenerate cases, characterized below.
\end{proposition}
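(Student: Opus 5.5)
The first claim is a shift-invariance statement, so I would obtain it directly from strict stationarity. The map $(z_{t+K-1},z_{t+K})\mapsto\|f_\star(z_{t+K-1})-z_{t+K}\|^2$ is a fixed measurable function of a pair of consecutive coordinates. Strict stationarity says that the pair $(z_{t+K-1},z_{t+K})$ has the same law as $(z_t,z_{t+1})$. Hence the expectations agree, and the left-hand side equals $R_1(f_\star)$ for every $K\ge1$. Finiteness comes from $\Ex\|z_t\|^2<\infty$ together with $\Ex\|f_\star(z_t)\|^2\le\Ex\|z_{t+1}\|^2$, which is conditional Jensen. The point to state carefully is that $f_\star$ is defined once, as a version of $\Ex[z_{t+1}\mid z_t=\cdot]$. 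Stationarity then makes the same function a version of $\Ex[z_{s+1}\mid z_s=\cdot]$ for every $s$, so teacher forcing never sees a different regression function. This step uses neither the Markov property nor linearity.

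For the second sentence I would give the characterization on the linear-Gaussian branch. That is the case the paper has set up, and it is where the bill \eqref{eq:bill} is available. With $z_{t+1}=Az_t+\varepsilon_{t+1}$ and $f_\star(z)=Az$, I would write
\[
z_{t+K}=A^Kz_t+\sum_{j=0}^{K-1}A^j\varepsilon_{t+K-j}.
\]
By independence of the innovations from the past, the cross terms vanish, and this gives \eqref{eq:bill}:
\[
R_K(f_\star)=\tr\sum_{j=0}^{K-1}A^j\Sigma(A^j)^\top .
\]
Then $R_K(f_\star)-R_1(f_\star)=\sum_{j=1}^{K-1}\tr(A^j\Sigma (A^j)^\top)$. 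This is a sum of traces of PSD matrices, so $R_K$ is nondecreasing in $K$. Equality for all $K$ (already for $K=2$) holds iff $\tr(A\Sigma A^\top)=0$, i.e.\ $A\Sigma^{1/2}=0$. In words, the mean map annihilates the range of the innovation covariance. The degenerate cases are therefore $\Sigma=0$, where the process is deterministic, and more generally the case where every innovation lies in the kernel of $A$, so that the noise injected at one step is forgotten by the next. If $A\Sigma^{1/2}=0$ then $A^j\Sigma^{1/2}=0$ for all $j\ge1$, so equality at $K=2$ forces equality at every $K$.

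For a general stationary Markov latent I would sketch the analogous decomposition. Write $z_{t+K}-f_\star^{(K)}(z_t)$ as the sum of $z_{t+K}-\Ex[z_{t+K}\mid z_t]$ and $\Ex[z_{t+K}\mid z_t]-f_\star^{(K)}(z_t)$, which are orthogonal by \eqref{eq:projection}. The first term's risk is a telescoping sum of pushed-forward conditional variances, and the second term is the closure defect. The main obstacle is making ``degenerate'' precise in this nonlinear generality. I expect to state the clean iff only for the linear-Gaussian branch, and for the general case note that equality requires both a vanishing closure defect and vanishing propagated conditional variance. Otherwise $R_K(f_\star)$ is only bounded below by the variance part, which already exceeds $R_1$.
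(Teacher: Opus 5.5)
Your proposal is correct and takes the same route as the paper. The teacher-forced identity comes from strict stationarity of the pair $(z_{t+K-1},z_{t+K})$, and the degenerate cases are read off the residual bill of Theorem~\ref{thm:bill}; your version also sharpens the paper's ``nonzero term beyond $j=0$'' into the exact condition $A\Sigma^{1/2}\neq 0$, which is already decided at $K=2$ because $A^j\Sigma^{1/2}=A^{j-1}(A\Sigma^{1/2})$.
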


\begin{proof}
Stationarity equates the joint law of $(z_{t+K-1},z_{t+K})$ with the joint law of $(z_t,z_{t+1})$.
The identity $R_K=R_1$ for all $K$ fails as soon as the sum in \eqref{eq:bill} has a nonzero term beyond $j=0$, which is Theorem~\ref{thm:bill}.
\end{proof}

\subsection{Linear-Gaussian Markov latents}

\begin{theorem}[Residual bill]
\label{thm:bill}
Let $z_{t+1}=Az_t+\varepsilon_{t+1}$ with $\Ex[\varepsilon_{t+1}\mid \cF_t]=0$, $\Cov(\varepsilon_{t+1}\mid \cF_t)=\Sigma$, and $\varepsilon_{t+1}$ independent of $\cF_t$.
Set $f(z)=Az$.
Then $f=f_\star$, $R_1(f)=\tr(\Sigma)$, $\Ex[z_{t+K}\mid \cF_t]=A^K z_t$, and
\begin{equation}
\Cov(z_{t+K}\mid \cF_t)=\sum_{j=0}^{K-1} A^j\Sigma (A^j)^\top,
\qquad
R_K(f)=\tr\sum_{j=0}^{K-1} A^j\Sigma (A^j)^\top.
\label{eq:covk}
\end{equation}
If $d=1$, $A=(a)$, and $\Sigma=(\sigma^2)$, this reduces to
\begin{equation}
R_K(f)=\sigma^2\frac{1-a^{2K}}{1-a^2}\qquad(a^2\neq 1),
\label{eq:scalar}
\end{equation}
and to $R_K(f)=K\sigma^2$ if $a^2=1$.
In every dimension,
\begin{equation}
R_{K+1}(f)=R_K(f)+\tr\!\big(A^K\Sigma(A^K)^\top\big)\ge R_K(f),
\label{eq:increment}
\end{equation}
and the increment is strictly positive whenever $\Sigma$ is positive definite and $A^K$ is invertible.
In one dimension the increment equals $\sigma^2 a^{2K}$, so the risk is strictly increasing in $K$ whenever $\sigma^2>0$ and $a\neq 0$.
\end{theorem}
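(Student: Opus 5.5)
The plan is to unroll the recursion and read every claim off the resulting moving-average representation. Iterating $z_{s+1}=Az_s+\varepsilon_{s+1}$ from $s=t$ gives
\[
z_{t+K}=A^K z_t+\sum_{j=0}^{K-1}A^j\varepsilon_{t+K-j},
\]
which follows by induction on $K$. Each innovation $\varepsilon_{t+i}$ with $i\ge 1$ has conditional mean zero given $\cF_{t+i-1}\supseteq\cF_t$. The tower property then gives $\Ex[z_{t+K}\mid\cF_t]=A^Kz_t$. The case $K=1$ gives $\Ex[z_{t+1}\mid\cF_t]=Az_t$, which is $\sigma(z_t)$-measurable, so the process is Markov and $f_\star(z)=Az$. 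Equivalently, $f=f_\star$ by the projection identity \eqref{eq:projection}.

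For the covariance, I will show that the innovations are conditionally uncorrelated given $\cF_t$. Take $1\le i<k$. Then $\varepsilon_{t+i}$ is $\cF_{t+k-1}$-measurable, since $\varepsilon_{t+i}=z_{t+i}-Az_{t+i-1}$. Conditioning first on $\cF_{t+k-1}$ therefore kills $\Ex[\varepsilon_{t+i}\varepsilon_{t+k}^\top\mid\cF_t]$. Each diagonal term equals $\Sigma$, by the independence assumption together with the tower property. Summing gives \eqref{eq:covk} for $\Cov(z_{t+K}\mid\cF_t)$.

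Next I observe that $f^{(K)}(z)=A^Kz$, so the open-loop residual is exactly the noise sum. Hence
\[
R_K(f)=\Ex\,\tr\Cov(z_{t+K}\mid\cF_t)=\tr\sum_{j=0}^{K-1}A^j\Sigma(A^j)^\top .
\]
Taking $K=1$ gives $R_1(f)=\tr\Sigma$.

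The scalar formula \eqref{eq:scalar} is a geometric sum $\sigma^2\sum_{j<K}a^{2j}$, which equals $K\sigma^2$ when $a^2=1$. The increment \eqref{eq:increment} is the $j=K$ term. It is nonnegative because it is the trace of a PSD matrix. When $\Sigma\succ0$ and $A^K$ is invertible, write $A^K\Sigma(A^K)^\top=B B^\top$ with $B=A^K\Sigma^{1/2}$ invertible. Then the trace is $\|B\|_F^2>0$. In $d=1$ the increment is $\sigma^2a^{2K}$, which is positive whenever $\sigma^2>0$ and $a\ne0$.

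The only delicate step is the cross-covariance vanishing. The hypothesis of independence from $\cF_t$ is stated only relative to the immediately preceding filtration. So I must apply it at time $t+k-1$, not at $t$, and use nested conditioning. I would write that step carefully and treat everything else as routine.
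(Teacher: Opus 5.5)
Your proposal is correct and follows the paper's proof. You unroll to $z_{t+K}=A^Kz_t+\sum_{j<K}A^j\varepsilon_{t+K-j}$, kill the cross terms by conditioning on the filtration just before the later innovation (the tower-property argument the paper gives in its appendix), and then read off the trace, the geometric sum, and the increment $\tr(A^K\Sigma(A^K)^\top)$. Your explicit observation that $\varepsilon_{t+i}=z_{t+i}-Az_{t+i-1}$ is $\cF_{t+k-1}$-measurable is a slightly cleaner justification than the main text's appeal to mutual independence of the innovations, but the route is the same.
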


\begin{proof}
The conditional mean of $\varepsilon_{t+1}$ is zero, so $f_\star(z)=Az$.
Unrolling the recurrence gives
\begin{equation}
z_{t+K}=A^K z_t+\sum_{j=0}^{K-1} A^j\varepsilon_{t+K-j}.
\label{eq:unroll}
\end{equation}
Each innovation is independent of $\cF_t$ and of the other innovations, with mean zero, so the conditional expectation of the sum is zero and the cross terms in the conditional covariance vanish.
Taking the trace and the expectation removes the conditioning because the conditional covariance does not depend on $z_t$.
In one dimension the sum is geometric.
Horizon $K+1$ appends one summand, $A^K\Sigma(A^K)^\top$, which is the increment \eqref{eq:increment}.
Its trace is nonnegative, and it is positive when $\Sigma$ is positive definite and $A^K$ has full rank.
\end{proof}

The last sentence of the proof is the fact a training curve cannot see.
After $A$ has been learned perfectly, every extra step of open-loop rollout adds a fresh copy of $\Sigma$, rotated by the powers of $A$.
The one-step loss has already reached its floor $\tr(\Sigma)$ and stays there under teacher forcing.
The rollout error keeps the bill.

The same unrolling says what a world model must sample.
\begin{corollary}[Mean iteration and kernel iteration]
\label{cor:sample}
Under the hypotheses of Theorem~\ref{thm:bill}, the conditional law of $z_{t+K}$ given $\cF_t$ is determined by $(A,\Sigma)$ whenever $\varepsilon_t$ is Gaussian.
The deterministic rollout $\hat z_{t+1}=A\hat z_t$ returns $\Ex[z_{t+K}\mid z_t]$ and has risk \eqref{eq:covk}.
The recursion $\tilde z_{t+1}=A\tilde z_t+\varepsilon_{t+1}$ with fresh innovations returns a sample of that conditional law.
Both recursions are fixed by the one-step normal equation for $A$ and by the residual covariance $\Sigma=\Cov(z_{t+1}-Az_t)$.
\end{corollary}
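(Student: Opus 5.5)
The proof rests on the unrolled recurrence \eqref{eq:unroll} from the proof of Theorem~\ref{thm:bill}, combined with the independence assumption on the innovations. There are four claims to check in order: the law claim, the mean-rollout claim, the sampling claim, and the identification claim.

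\textbf{Law claim.} Write $z_{t+K}=A^Kz_t+\sum_{j=0}^{K-1}A^j\varepsilon_{t+K-j}$. The innovations $\varepsilon_{t+1},\dots,\varepsilon_{t+K}$ are independent of $\cF_t$ and of each other. Independence from $\cF_t$ gives the first part. For mutual independence, $\varepsilon_{t+k}$ is independent of $\cF_{t+k-1}$, and that $\sigma$-field contains $\varepsilon_{t+1},\dots,\varepsilon_{t+k-1}$. Assume the innovations are Gaussian $N(0,\Sigma)$. Then the noise sum is Gaussian with mean zero and covariance $\sum_j A^j\Sigma(A^j)^\top$, and it is independent of $\cF_t$. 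Hence the conditional law of $z_{t+K}$ given $\cF_t$ is $N\big(A^Kz_t,\sum_{j<K}A^j\Sigma(A^j)^\top\big)$. This depends on $(A,\Sigma)$ and $z_t$ only, which also shows that conditioning on $\cF_t$ and conditioning on $z_t$ agree.

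\textbf{Mean rollout.} Starting from $\hat z_t=z_t$, induction gives $\hat z_{t+K}=A^Kz_t$. This equals $\Ex[z_{t+K}\mid z_t]$ by Theorem~\ref{thm:bill}, and its squared-error risk is \eqref{eq:covk}, as already computed there.

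\textbf{Sample path.} Start from $\tilde z_t=z_t$ and draw fresh i.i.d.\ innovations $\varepsilon'_k\sim N(0,\Sigma)$ independent of everything else. Unrolling gives $\tilde z_{t+K}=A^Kz_t+\sum_j A^j\varepsilon'_{t+K-j}$. This has the same conditional law given $z_t$, by the same computation.

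\textbf{Identification.} I need to show that both recursions are pinned down by one-step quantities. By stationarity and the projection identity \eqref{eq:projection}, $f_\star(z)=Az$ is the $L^2$ minimizer. Its first-order (normal) equation is
\[
A\,\Ex[z_tz_t^\top]=\Ex[z_{t+1}z_t^\top],
\]
which determines $A$ whenever $\Ex[z_tz_t^\top]$ is invertible. The residual $z_{t+1}-Az_t$ equals $\varepsilon_{t+1}$, so its covariance is $\Sigma$.

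The main obstacle is exactly this invertibility caveat. If the second-moment matrix is singular, the normal equation determines $A$ only on the support of $z_t$. I would state that the recursions are determined on that support, and observe that this suffices, since rollouts started from $z_t$ stay in the range reached under stationarity. Everything else is a routine Gaussian computation.
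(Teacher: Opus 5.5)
Your proposal is correct and takes the paper's route: the paper gives no separate proof beyond reading the corollary off the unrolling in \eqref{eq:unroll}, which is exactly what you do, and your singular-Gram caveat matches the ``on the range of $\Ex[z_tz_t^\top]$'' qualification the paper uses in Theorem~\ref{thm:iso}. Stationarity is not a hypothesis of Theorem~\ref{thm:bill}, and you do not need it: $\Ex[\varepsilon_{t+1}z_t^\top]=0$ gives the normal equation at every $t$, and $\Ex[z_{t+1}z_{t+1}^\top]=A\,\Ex[z_tz_t^\top]A^\top+\Sigma$ shows that $A$ maps the range of one Gram matrix into the range of the next, which is the one-line justification your claim about rollouts staying in range leaves unstated.
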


So, inside this class, one-step regression is enough to build the world model, and it is not itself the world model.
The map $f(z)=Az$ that minimizes $R_1$ emits a point.
The kernel emits a cloud of radius given by the bill.
Reporting $R_1$ as evidence of multi-step generation reports the radius of one innovation and omits the sum.

For the scalar process used in the experiment, $a=0.9$ and $\sigma^2=1$, equation \eqref{eq:scalar} gives the ratio
\[
\frac{R_{16}}{R_1}=\frac{1-0.9^{32}}{1-0.9^2}=\frac{1-0.9^{32}}{0.19}\approx 5.082.
\]
The measured innovation variance on the finite sample is $0.9982$, so the predicted sixteen-step risk is $5.073$.
Section~\ref{sec:exp} records an empirical open-loop error of $5.100$ for the fitted one-step coefficient $0.9001$.

\subsection{Nonlinear conditional means}

Affine maps commute with conditional expectation.
Other maps do not.
\begin{theorem}[Affine closure]
\label{thm:affine}
Let $(z_t)$ be Markov and let $f(z)=\Ex[z_{t+1}\mid z_t=z]$.
If $f(z)=Bz+c$, then $\Ex[z_{t+K}\mid z_t]=f^{(K)}(z_t)$ for every $K\ge 1$.
Conversely, there exists a Markov process whose one-step conditional mean is nonlinear and for which the identity already fails at $K=2$: if $z_{t+1}=z_t^2+\varepsilon_{t+1}$ with $\varepsilon_{t+1}\sim\mathcal{N}(0,1)$ independent of $\cF_t$, then $f(u)=u^2$, $f(f(0))=0$, and $\Ex[z_{t+2}\mid z_t=0]=1$.
\end{theorem}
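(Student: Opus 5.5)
The affine direction goes by induction on $K$, using the Markov property together with the tower property; the counterexample is a direct two-line computation.

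\emph{Base case and set-up.} For $K=1$ the identity $\Ex[z_{t+1}\mid z_t]=f(z_t)$ is the definition of $f$. For the inductive step I will work with the full past $\cF_t$ rather than with $z_t$ alone, and prove the stronger claim $\Ex[z_{t+K}\mid \cF_t]=f^{(K)}(z_t)$. The stronger claim implies the stated one by conditioning further on $\sigma(z_t)\subseteq\cF_t$, since $f^{(K)}(z_t)$ is already $\sigma(z_t)$-measurable.

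Throughout I read the Markov hypothesis in the paper's form, $\Ex[z_{s+1}\mid\cF_s]=\Ex[z_{s+1}\mid z_s]$. I also use that $f$ describes the one-step mean at every time index: either stationarity is assumed, as in the setup around \eqref{eq:projection}, or $f$ is time-homogeneous by hypothesis.

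\emph{Inductive step.} Suppose the claim holds at $K$. By the tower property,
\[
\Ex[z_{t+K+1}\mid\cF_t]=\Ex\big[\Ex[z_{t+K+1}\mid\cF_{t+K}]\mid\cF_t\big]=\Ex[Bz_{t+K}+c\mid\cF_t].
\]
Linearity of conditional expectation then gives $B\,\Ex[z_{t+K}\mid\cF_t]+c$. The inductive hypothesis turns this into $Bf^{(K)}(z_t)+c=f^{(K+1)}(z_t)$.

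The point where affinity matters is pulling $B(\cdot)+c$ outside the conditional expectation. For a nonlinear $f$ we only have $\Ex[f(z_{t+K})\mid\cF_t]$, which is in general not $f(\Ex[z_{t+K}\mid\cF_t])$. Integrability is not an issue, since $\Ex\|z_t\|^2<\infty$ makes every conditional expectation above well defined.

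\emph{Counterexample.} Take $z_{t+1}=z_t^2+\varepsilon_{t+1}$ with $\varepsilon_{t+1}$ independent of $\cF_t$ and of mean zero. This process is Markov with $f(u)=u^2$, so $f(f(0))=0$.

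For the two-step mean, condition on $\cF_{t+1}$:
\[
\Ex[z_{t+2}\mid\cF_{t+1}]=z_{t+1}^2.
\]
Taking conditional expectations given $z_t=0$, we have $z_{t+1}=\varepsilon_{t+1}$, and therefore
\[
\Ex[z_{t+2}\mid z_t=0]=\Ex[\varepsilon_{t+1}^2]=1\neq 0.
\]
The gap is exactly the conditional variance, which is the Jensen gap for the convex map $u\mapsto u^2$.

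\emph{Anticipated obstacle.} This process is not stationary, and it is not even square-integrable: $z_t^2$ iterated together with Gaussian noise has moments that blow up. The theorem as stated assumes only Markov, so I will not need stationarity. Still, to respect the standing assumption $\Ex\|z_t\|^2<\infty$, I would present the counterexample as a finite two-step chain started at $z_t=0$, or at any initial law with enough moments. That keeps every quantity finite; for instance $\Ex z_{t+2}^2=\Ex(\varepsilon^2+\varepsilon')^2=4$. This is the only delicate point, and it is a matter of bookkeeping rather than substance.
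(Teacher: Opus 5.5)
Your proof is correct and follows the paper's route: an induction in which the tower property and the affinity of the map let you pull it outside the conditional expectation, plus the same direct computation $\Ex[z_{t+2}\mid z_t=0]=\Ex[\varepsilon_{t+1}^2]=1$ for the quadratic counterexample. The one difference is in the inductive step. You condition on $\cF_{t+K}$ (the last step) and carry the stronger hypothesis $\Ex[z_{t+K}\mid\cF_t]=f^{(K)}(z_t)$, whereas the paper conditions on $z_{t+1}$ (the first step) and pulls out the affine $f^{(K)}$; your version has the small advantage that each tower step uses only the paper's one-step, mean-form Markov definition.
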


\begin{proof}
If $f(z)=Bz+c$, the tower property gives
\begin{align*}
\Ex[z_{t+2}\mid z_t]
&=\Ex[\Ex[z_{t+2}\mid z_{t+1}]\mid z_t]
=\Ex[Bz_{t+1}+c\mid z_t]\\
&=B(Bz_t+c)+c=f(f(z_t)).
\end{align*}
The inductive step from $K$ to $K+1$ is the same, because $f$ pulls out of the conditional expectation.
For the counterexample, $f(u)=u^2$ and $f(f(0))=0$.
Given $z_t=0$, one has $z_{t+1}=\varepsilon_{t+1}$ and $z_{t+2}=\varepsilon_{t+1}^2+\varepsilon_{t+2}$, whose expectation is $\Ex[\varepsilon_{t+1}^2]=1$.
The two numbers differ by the conditional variance of $z_{t+1}$, which is the Jensen gap of $u\mapsto u^2$.
\end{proof}

The counterexample does not need a stationary process.
It needs a state, here $0$, at which the conditional law of the next latent is dispersed and $f$ bends.
A one-step training loss still drives $f$ toward $u\mapsto u^2$, because that is the conditional mean.
Rolling that $f$ out deterministically from $0$ predicts $0$ two steps ahead.
The true conditional mean is $1$.
Multi-step regression on the iterated map would see this gap.
One-step regression cannot, because both maps agree at horizon one: $\Ex[z_{t+1}\mid z_t=0]=0$.

\subsection{Partial observations}

World-model training often sees an observation $y_t$, not the latent.
LeNEPA's tokens are functions of a window of the series, so the practical question is whether that window is a state.
\begin{theorem}[Memoryless maps on a rotation]
\label{thm:obs}
Let $z_t\in\mathbb{R}^2$ and $z_t=R_\theta z_{t-1}$, where $R_\theta$ is rotation by $\theta$ with $\sin\theta\neq 0$.
Let $y_t=e_1^\top z_t$.
No measurable map $g:\mathbb{R}\to\mathbb{R}$ satisfies $g(y_t)=y_{t+1}$ for all $z_t$.
The linear map
\begin{equation}
\Phi z=
\begin{pmatrix}
1&0\\
\cos\theta&-\sin\theta
\end{pmatrix}
z
\label{eq:phi}
\end{equation}
sends $z_{t-1}$ to $(y_{t-1},y_t)$, has determinant $-\sin\theta\neq 0$, and therefore $z_t$ is a function of $(y_{t-1},y_t)$.
Consequently $y_{t+K}$ is a function of that window for every $K$.
\end{theorem}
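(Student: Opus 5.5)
The plan has three parts: show that no memoryless map exists by exhibiting a collision, compute $\Phi$ directly, and propagate the window forward by powers of $R_\theta$.

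\emph{Non-existence of $g$.} I will find two latent states with the same observation whose next observations differ; any $g$ would then have to take two values at one point. Write $z_t=(u,v)^\top$. Then $y_t=u$ and $y_{t+1}=e_1^\top R_\theta z_t=u\cos\theta-v\sin\theta$. Take $z_t=(0,1)^\top$ and $z_t'=(0,-1)^\top$. Both give $y_t=0$, but $y_{t+1}=-\sin\theta$ and $y_{t+1}'=\sin\theta$, which differ because $\sin\theta\neq 0$. So no $g$ satisfies $g(0)=-\sin\theta$ and $g(0)=\sin\theta$ at once. Measurability is never used, so the obstruction holds for every function, measurable or not. Because the map $z_t\mapsto z_{t+1}$ is deterministic, "for all $z_t$" just means that both states are admissible initial conditions; I should say this explicitly.

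\emph{The window map.} Set $z_{t-1}=(u,v)^\top$. Then $y_{t-1}=u$, and $y_t=e_1^\top R_\theta z_{t-1}=u\cos\theta-v\sin\theta$. This uses the convention
\[
R_\theta=\begin{pmatrix}\cos\theta&-\sin\theta\\ \sin\theta&\cos\theta\end{pmatrix}.
\]
These two rows are exactly the rows of $\Phi$ in \eqref{eq:phi}, so $\Phi z_{t-1}=(y_{t-1},y_t)^\top$. The determinant is $1\cdot(-\sin\theta)-0\cdot\cos\theta=-\sin\theta\neq 0$, so $\Phi$ is invertible and $z_{t-1}=\Phi^{-1}(y_{t-1},y_t)^\top$. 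It follows that $z_t=R_\theta\Phi^{-1}(y_{t-1},y_t)^\top$, which is a linear, hence measurable, function of the window.

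\emph{Consequence.} The process is deterministic, so $y_{t+K}=e_1^\top R_\theta^{K}z_t=e_1^\top R_\theta^{K+1}\Phi^{-1}(y_{t-1},y_t)^\top$ for every $K\ge 0$, and this is a linear function of the window. This contrasts with the first part: the window is a state for the observation process, while the single frame is not.

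There is no real obstacle. The one step needing care is the sign convention for $R_\theta$, because the second row of $\Phi$ has to match $e_1^\top R_\theta$. If the paper instead means the rotation with $+\sin\theta$ in the top-right entry, the same argument goes through with the sign of that entry flipped, and the determinant is still $\pm\sin\theta\neq 0$. I will fix the convention displayed above.
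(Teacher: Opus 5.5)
Your proposal is correct and matches the paper's argument. Like you, the paper gets the collision from $y_{t+1}=\cos\theta\,y_t-\sin\theta\,z_{t,2}$, reads $\Phi$ off $e_1^\top R_\theta$ using the same convention $R_\theta=\begin{pmatrix}\cos\theta&-\sin\theta\\ \sin\theta&\cos\theta\end{pmatrix}$ (fixed in its appendix), and inverts $\Phi$ before applying powers of $R_\theta$; your explicit pair $(0,\pm1)^\top$ and the closed form $y_{t+K}=e_1^\top R_\theta^{K+1}\Phi^{-1}(y_{t-1},y_t)^\top$ only make those steps concrete.
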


\begin{proof}
From the rotation, $y_{t+1}=\cos\theta\, y_t-\sin\theta\, z_{t,2}$.
If $z_{t,2}$ can be either of two distinct values at fixed $y_t$, the two futures differ by a nonzero multiple of $\sin\theta$.
A map of $y_t$ alone takes one value and cannot match both.
The matrix in \eqref{eq:phi} is the change of variables $(z_{t-1,1},z_{t-1,2})\mapsto(y_{t-1},y_t)$, and its determinant is $-\sin\theta$.
Invertibility gives $z_{t-1}$, then $z_t=R_\theta z_{t-1}$, then every future observation.
\end{proof}

With observation noise the equality becomes an estimation statement.
The experiment in Section~\ref{sec:exp} uses process noise $0.05$ and observation noise $0.05$.
The rank condition explains the shape of the result: a window of length eight, which is longer than the two coordinates the theorem requires, drives sixteen-step error from $0.778$ down to $0.056$, and training that window with a one-step loss is enough.
The multi-step unroll is not the ingredient that repairs a non-Markov observation.
The state is.

\subsection{Isotropy does not see the transition}

Let $y_t\in\mathbb{R}^{d_y}$ and let an encoder and a transition be bias-free linear maps $z=Wy$ and $\hat z=Az$, with $W\in\mathbb{R}^{d\times d_y}$ and $A\in\mathbb{R}^{d\times d}$.
The training objective used in the third experiment, and the one that isolates the LeJEPA penalty from the transition, is
\begin{equation}
L(W,A)
=\Ex\|AWy_t-\sg(Wy_{t+1})\|^2
+\lambda\,\|\widehat\Cov(Wy)-I\|_F^2.
\label{eq:liso}
\end{equation}
The stop-gradient $\sg$ is the NEPA device.
LeNEPA's published training replaces it by the isotropy penalty; the decomposition below is the reason that replacement still leaves $A$ unsupervised by $\lambda$.
The covariance is the empirical covariance of the batch of embeddings, or its population counterpart.
In either case it is a function of the marginal pushforward of $y$ through $W$.

\begin{theorem}[The penalty has no transition gradient]
\label{thm:iso}
The second term of \eqref{eq:liso} does not depend on $A$.
Its gradient with respect to $A$ is zero.
At a stationary encoder $W$, the minimizing transition is the one-step coefficient
\begin{equation}
A_\star(W)=\Ex[z_{t+1}z_t^\top]\,\Ex[z_t z_t^\top]^{+},
\qquad z_t=Wy_t,
\label{eq:astar}
\end{equation}
on the range of $\Ex[z_t z_t^\top]$, and $\lambda$ enters $A_\star$ only through $W$.
If $W$ and $W'$ induce the same joint second-moment matrix of $(z_t,z_{t+1})$, they induce the same predictive term and the same $A_\star$, whether or not they induce the same marginal isotropy.
\end{theorem}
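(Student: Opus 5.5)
The proof splits into three parts, following the three sentences of Theorem~\ref{thm:iso}.

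\emph{The penalty term.} The penalty $\lambda\|\widehat\Cov(Wy)-I\|_F^2$ is a composition of maps that involve $W$ and the sample (or the law) of $y$ only. For the empirical version, $\widehat\Cov(Wy)=W\,\widehat\Cov(y)\,W^\top$. For the population version, $\Cov(Wy)=W\Cov(y)W^\top$. Neither contains $A$. So the term is constant in $A$ for fixed $W$, and its partial derivative in $A$ vanishes identically. It follows that $\partial L/\partial A$ equals the partial derivative of the predictive term alone.

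\emph{The minimizing transition.} Fix $W$ and write $z_t=Wy_t$. The stop-gradient does not change the value of the predictive term; it only removes the target's dependence on $W$ from the gradient. With $W$ held fixed, the target $z_{t+1}$ is therefore a fixed random vector. The predictive term becomes the quadratic
\[
\Ex\|Az_t-z_{t+1}\|^2=\tr\big(A M_0A^\top\big)-2\tr\big(A M_{01}^\top\big)+\Ex\|z_{t+1}\|^2,
\]
where $M_0=\Ex[z_tz_t^\top]$ and $M_{01}=\Ex[z_{t+1}z_t^\top]$.

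This is convex in $A$. Setting the gradient $2AM_0-2M_{01}$ to zero gives the normal equation $AM_0=M_{01}$.

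The step that needs care is solvability when $M_0$ is singular. I would show that $M_{01}$ vanishes on $\ker M_0$. If $M_0v=0$, then $\Ex[(v^\top z_t)^2]=0$, so $z_t^\top v=0$ almost surely, and hence $M_{01}v=\Ex[z_{t+1}(z_t^\top v)]=0$. Equivalently, $M_{01}=M_{01}M_0^{+}M_0$.

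It follows that $A_\star=M_{01}M_0^{+}$ solves the normal equation. Every minimizer agrees with $A_\star$ on $\operatorname{range}M_0$, and the predictive value depends only on that restriction, because $z_t$ lies in $\operatorname{range}M_0$ almost surely. This is the sense of ``on the range'' in \eqref{eq:astar}. It is the same $L^2$ projection as \eqref{eq:projection}, restricted to linear $f$, and it matches the normal equation in Corollary~\ref{cor:sample}.

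\emph{Where $\lambda$ enters.} The formula for $A_\star$ involves only the second moments of $(z_t,z_{t+1})$, and these are determined by $W$ and the law of $(y_t,y_{t+1})$. So $\lambda$ can affect $A_\star$ only through the $W$ it selects. For the final claim, suppose $W$ and $W'$ give the same $M_0$, $M_{01}$ and $M_1=\Ex[z_{t+1}z_{t+1}^\top]$. The expansion above shows that the predictive term, as a function of $A$, is identical for the two encoders. Hence the set of minimizers and the formula $A_\star$ coincide.

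The marginal covariance appearing in the penalty is $W\Cov(y)W^\top$. It is built from centered moments, whereas the predictive term uses uncentered second moments. Equality of $M_0$ and $M_1$ does not force equal centered covariances when the means differ; in the stationary, mean-zero case it does. Either way, the claim asserts nothing about isotropy, so no extra argument is required there.

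I expect the only genuine obstacle to be the pseudoinverse/range bookkeeping when $M_0$ is singular; everything else is direct computation.
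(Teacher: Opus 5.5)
Your proof is correct and follows the paper's route: the penalty, being $\lambda\|W\widehat\Cov(y)W^\top-I\|_F^2$, is constant in $A$, and for fixed $W$ the predictive term is a convex least-squares problem in $A$ whose normal equation $A\,\Ex[z_tz_t^\top]=\Ex[z_{t+1}z_t^\top]$ involves only second moments. Your argument that $\Ex[z_{t+1}z_t^\top]$ vanishes on $\ker\Ex[z_tz_t^\top]$ supplies the solvability step behind ``on the range'' that the paper leaves implicit. Keeping $\Ex[z_{t+1}z_{t+1}^\top]$ in the hypothesis also correctly accounts for the constant term of the predictive loss, which the paper's one-line remark about the minimal loss leaves out.
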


\begin{proof}
The stop-gradient blocks $W$ inside the target, so for fixed $W$ the predictive term is a linear least-squares problem in $A$ whose solution is \eqref{eq:astar}.
The Frobenius penalty is constant with respect to $A$.
The last sentence is the normal equation: $A_\star$ and the minimal predictive loss are functions of $\Ex[z_t z_t^\top]$ and $\Ex[z_{t+1}z_t^\top]$ only.
\end{proof}

Isotropy can still change the rollout, because a joint minimization moves $W$, and $A_\star$ depends on $W$.
What it cannot do is prefer, at fixed $W$, the transition that rolls out farther.
Once $W$ is isotropic enough that the penalty is near its floor, further increases of $\lambda$ have nothing to change in the marginal and no channel into $A$ except numerical noise.
The third experiment is this comparative static: $\lambda\in\{0,0.1,1,10\}$.

\section{Experiments}
\label{sec:exp}

All runs use seed $0$ for the curves in Figure~\ref{fig:panels}.
Seeds $1$ and $2$ are reported where an optimization is initialization-sensitive.
Data are synthesized on one GPU.
The autoregression uses $4096$ sequences of length $64$.
The rotation uses $4096$ sequences of length $80$.
The encoder uses $2048$ sequences of length $48$.
Optimization is Adam.
Exact widths, step counts, and learning rates are in the appendix.
No number in the tables was filled in before the corresponding run finished.

\begin{figure}[H]
\centering
\includegraphics[width=\textwidth]{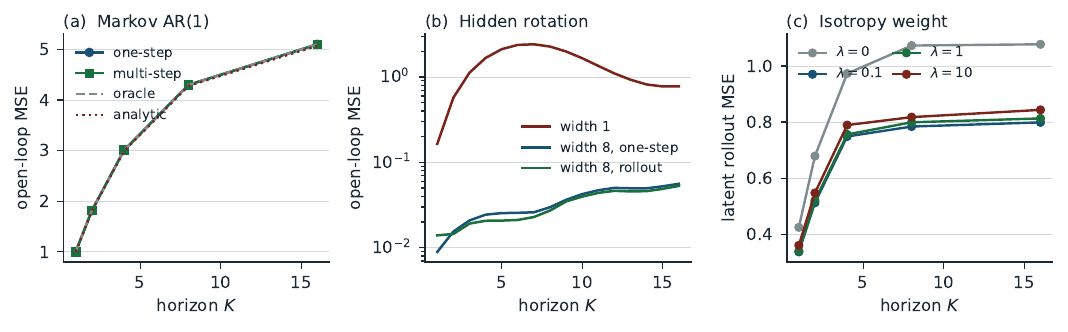}
\caption{Seed $0$.
(a) Scalar AR(1), $a=0.9$.
One-step regression, an eight-step unroll, and the oracle coefficient sit on the analytic residual bill.
(b) A planar rotation observed through one coordinate.
The memoryless curve follows the autocorrelation of the scalar observation, including the near-period at $K\approx 2\pi/0.4$.
A window of length eight keeps the open-loop error small under either a one-step or a rollout loss.
(c) Latent rollout error of a learned linear encoder and transition at four isotropy weights.
The flatness for $\lambda\ge 0.1$ is the quantity checked on two further seeds in Table~\ref{tab:iso}.}
\label{fig:panels}
\end{figure}

\subsection{Autoregression, where one step is enough for the mean}

The process is $x_{t+1}=0.9\,x_t+\varepsilon_{t+1}$ with $\varepsilon_{t+1}\sim\mathcal{N}(0,1)$ and $x_0=0$.
A scalar coefficient $a$ is fit by Adam for $400$ steps.
The one-step objective uses every consecutive pair.
The multi-step objective unrolls eight steps from a random time index and averages the open-loop squared error.
The fitted values are $a=0.9001$ and $a=0.8930$.
Open-loop error is then computed in closed form on the same sample, as $\Ex[(x_{t+K}-a^K x_t)^2]$, against the oracle $a=0.9$ and against \eqref{eq:scalar} with the sample innovation variance $0.9982$.

\begin{table}[H]
\caption{Open-loop MSE on the AR(1).
The analytic column is \eqref{eq:scalar} at $a=0.9$ with $\sigma^2=0.9982$.
Teacher-forced one-step error equals the $K=1$ entry at every horizon.}
\label{tab:ar}
\centering
\small
\begin{tabular}{lrrrr}
\toprule
$K$ & one-step $a=0.9001$ & unroll $a=0.8930$ & oracle $0.9$ & analytic \\
\midrule
1 & 0.998 & 0.998 & 0.998 & 0.998 \\
2 & 1.813 & 1.813 & 1.813 & 1.807 \\
4 & 3.007 & 3.009 & 3.007 & 2.992 \\
8 & 4.298 & 4.300 & 4.298 & 4.280 \\
16 & 5.100 & 5.101 & 5.100 & 5.073 \\
\bottomrule
\end{tabular}
\end{table}

Table~\ref{tab:ar} is Theorem~\ref{thm:bill} on a finite sample.
The one-step coefficient matches the oracle to $10^{-4}$, and the sixteen-step error is still five times the one-step error.
The eight-step training objective does not undercut the bill: its coefficient is slightly short of $0.9$ after $400$ steps, and its rollout curve lies on top of the one-step curve.
The gap between the empirical $5.100$ and the analytic $5.073$ is the finite-sample gap in the second moment, not a missing loss term.
A planner that rolls the conditional mean forward for sixteen steps pays about $5.10$, not $0.998$.
A planner that samples $\varepsilon$ at each step draws from the conditional law whose variance is that same number.
Both behaviors are fixed once $a$ and $\sigma^2$ are known, and both are known from one step.

\subsection{Hidden rotation, where the state is the missing object}

The latent is planar, $z_{t+1}=R_\theta z_t+\eta_{t+1}$ with $\theta=0.4$ and $\eta_{t+1}\sim\mathcal{N}(0,0.05^2 I)$.
The observation is the first coordinate plus $\mathcal{N}(0,0.05^2)$ noise.
A memoryless predictor is a network with two hidden layers of width $64$ reading $y_t$, trained with one-step squared error.
A window predictor reads eight consecutive observations.
It is trained either with one-step error or with an eight-step open-loop unroll.
Evaluation is open loop from time $20$ on $2000$ sequences, feeding predictions back into the window.

\begin{table}[H]
\caption{Open-loop MSE on the hidden rotation, seed $0$.
Seeds $1$ and $2$ keep the same ordering: memoryless error at $K=8$ is $2.602$ and $2.537$, and windowed one-step error at $K=8$ is $0.031$ and $0.030$.}
\label{tab:rot}
\centering
\small
\begin{tabular}{lrrr}
\toprule
$K$ & width $1$, one-step & width $8$, one-step & width $8$, unroll \\
\midrule
1 & 0.165 & 0.009 & 0.014 \\
2 & 0.572 & 0.015 & 0.015 \\
4 & 1.670 & 0.024 & 0.021 \\
8 & 2.268 & 0.030 & 0.027 \\
16 & 0.778 & 0.056 & 0.053 \\
\bottomrule
\end{tabular}
\end{table}

Table~\ref{tab:rot} matches Theorem~\ref{thm:obs} once noise is added.
The memoryless error grows past $2$ and then falls to $0.778$ at $K=16$.
The fall is the period of the rotation: $2\pi/0.4\approx 15.7$, so a sixteen-step forecast lands near the same phase as the present, and a predictor that ignores the hidden coordinate is scored against a returning autocorrelation.
It has not recovered the state.
The window of length eight, one longer than the two-dimensional latent and long enough to average the observation noise, reaches $0.056$ at the same horizon with a one-step loss.
The unroll loss is slightly worse at $K=1$ ($0.014$ against $0.009$) and slightly better at $K=16$ ($0.053$ against $0.056$).
That margin is consistent with Theorem~\ref{thm:affine}: after the window restores an approximate Markov state, the conditional mean is nearly linear, one-step regression identifies it, and an unroll does not buy a different mean.
The failure mode of the memoryless column is the sigma-algebra, not the horizon inside the loss.

\subsection{Isotropy, where the penalty fixes the marginal}

The latent is two-dimensional with
\[
A_{\mathrm{true}}=\begin{pmatrix}0.8&-0.3\\0.2&0.7\end{pmatrix},
\]
innovation variance $0.3^2$, a random $4\times 2$ emission, and observation noise $0.1$.
The model is a bias-free map $W\in\mathbb{R}^{2\times 4}$ and a bias-free $A\in\mathbb{R}^{2\times 2}$, trained for $500$ Adam steps at learning rate $10^{-2}$ on \eqref{eq:liso}.
The target embedding is stop-gradient.
Open-loop error is measured in the encoder's coordinates, from time $10$, on $512$ sequences: apply $A$ for $K$ steps to $Wy_t$ and compare with $Wy_{t+K}$.
This is the rollout of the learned latent, which is the quantity Theorem~\ref{thm:iso} constrains.

\begin{table}[H]
\caption{Eight-step latent rollout MSE and off-diagonal energy $\|(C-\operatorname{diag}(C))\|_F^2/4$, three seeds.
For $\lambda\ge 0.1$ every eight-step entry lies in $[0.781,0.847]$.}
\label{tab:iso}
\centering
\small
\begin{tabular}{lccc ccc}
\toprule
& \multicolumn{3}{c}{eight-step MSE} & \multicolumn{3}{c}{off-diagonal energy} \\
\cmidrule(lr){2-4}\cmidrule(lr){5-7}
$\lambda$ & seed 0 & seed 1 & seed 2 & seed 0 & seed 1 & seed 2 \\
\midrule
$0$ & 1.073 & 2.047 & 0.419 & 0.132 & 0.088 & 0.019 \\
$0.1$ & 0.784 & 0.847 & 0.781 & $8\cdot 10^{-6}$ & $8\cdot 10^{-4}$ & 0.004 \\
$1$ & 0.799 & 0.784 & 0.809 & 0.001 & $3\cdot 10^{-4}$ & $8\cdot 10^{-5}$ \\
$10$ & 0.818 & 0.845 & 0.827 & 0.002 & $2\cdot 10^{-8}$ & $6\cdot 10^{-5}$ \\
\bottomrule
\end{tabular}
\end{table}

At $\lambda=0$ the eight-step error depends on the initialization: $1.073$, $2.047$, and $0.419$.
Seed $2$ without a penalty rolls out better than the same seed with a penalty ($0.419$ against $0.781$ at $\lambda=0.1$), while its off-diagonal energy falls from $0.019$ to $0.004$.
The penalty can improve isotropy and worsen the rollout at the same time, which is possible only because the two terms score different variables.
The nine entries with $\lambda\ge 0.1$ all lie in $[0.781,0.847]$.
The largest swing inside one seed on that range is seed $1$, from $0.784$ at $\lambda=1$ to $0.847$ at $\lambda=0.1$.
Theorem~\ref{thm:iso} predicts the flatness: after the covariance term is near zero, $\lambda$ has no remaining first-order effect on $A$.
The drop from the worst $\lambda=0$ run to the isotropic band is a change in $W$, which the theorem allows and does not interpret as transition supervision.

\section{Discussion}

The positive statement is narrow and useful.
Inside linear-Gaussian Markov models, a next-latent regression that has reached $R_1=\tr(\Sigma)$ has already estimated the world model.
The remaining implementation step is to store $\Sigma$ and to add a fresh Gaussian innovation at each imagined step.
Omitting that draw yields the conditional mean, whose squared error is \eqref{eq:bill} and whose sixteen-step value on the AR(1) is $5.10$ rather than the training loss $0.998$.
Publishing the training loss as a multi-step generation metric reports $R_1$ under teacher forcing, which Proposition~\ref{prop:tf} says is invariant to the horizon.

The negative statements are the ones that apply to a representation trained on raw observations.
If the token does not contain a state, Theorem~\ref{thm:obs} says that no head, one-step or multi-step, can invent the missing coordinate from a single frame.
LeNEPA's causal window is doing the work that the width-$8$ column does in Table~\ref{tab:rot}, and the isotropy penalty is doing the work that $\lambda$ does in Table~\ref{tab:iso}.
Neither ingredient is a substitute for checking open-loop error against the residual bill.
If the conditional mean bends, Theorem~\ref{thm:affine} says that a correct one-step head, iterated, is the wrong multi-step mean, and the gap is visible at horizon two.

The experiments are not an Atari or a robot benchmark.
They are processes for which the conditional mean, the innovation covariance, and the observation rank are known, so a disagreement would have falsified the algebra rather than the optimizer.
The encoder is linear.
The rotation network has two hidden layers of width $64$.
Three seeds are enough to see that $\lambda=0$ is initialization-sensitive and that $\lambda\in[0.1,10]$ is not, and they are not a confidence interval for a large transformer.
SIGReg's sliced estimator is not reimplemented.
The penalty in \eqref{eq:liso} is the squared distance of the covariance to the identity, the second-moment part of the isotropy constraint in \citet{balestriero20258544}.
Extending the bill to nonlinear $f$ with additive noise is immediate for the mean and false for the identity $f^{(K)}=\Ex[z_{t+K}\mid z_t]$, as the quadratic counterexample shows.
The stochastic completion in Corollary~\ref{cor:sample} needs a residual law, which squared error does not provide beyond the covariance.

\section{Conclusion}

One-step next-latent prediction fits a conditional mean and, together with the residual covariance, determines every horizon of a linear-Gaussian Markov model.
The deterministic rollout of that mean is not a sample from the conditional law, and its squared error is the residual bill, which on the AR(1) grows from $0.998$ to $5.10$ by horizon sixteen.
Outside that class the same loss does not determine the rollout: nonlinear means break closure under composition, partial observations break it under a memoryless map, and an isotropy penalty does not grade the transition.
A world model is the kernel obtained by putting the innovation back into the recursion.
The one-step map is the mean of that kernel.

\bibliography{refs}
\bibliographystyle{iclr2026_conference}

\appendix
\section{Proof details}
\label{app:proofs}

\subsection{Cross terms in the residual bill}

Start from \eqref{eq:unroll}.
Write $e_K=\sum_{j=0}^{K-1} A^j\varepsilon_{t+K-j}$.
For $j\neq \ell$, the tower property and the assumption $\Ex[\varepsilon_{s}\mid \cF_{s-1}]=0$ give
\[
\Ex[\varepsilon_{t+K-j}\varepsilon_{t+K-\ell}^\top\mid \cF_t]=0,
\]
because the later innovation has conditional mean zero given the earlier one.
Hence
\[
\Ex[e_K e_K^\top\mid \cF_t]
=\sum_{j=0}^{K-1} A^j\,\Ex[\varepsilon_{t+K-j}\varepsilon_{t+K-j}^\top\mid \cF_t]\,(A^j)^\top
=\sum_{j=0}^{K-1} A^j\Sigma(A^j)^\top.
\]
The squared norm is the trace.
Taking $\Ex$ and using independence of the conditional covariance from $z_t$ yields $R_K(f)$.
If $a^2=1$ in one dimension, each of the $K$ summands equals $\sigma^2$, so $R_K=K\sigma^2$.
If $a^2\neq 1$,
\[
\sum_{j=0}^{K-1}a^{2j}=\frac{1-a^{2K}}{1-a^2}.
\]

Let $S_K=\sum_{j=0}^{K-1} A^j\Sigma(A^j)^\top$.
Then $S_{K+1}=S_K+A^K\Sigma(A^K)^\top$, which is \eqref{eq:increment}.
The added matrix is positive semidefinite, and it is positive definite when $\Sigma$ is positive definite and $A^K$ is invertible.
Shifting the time index in \eqref{eq:unroll} also gives the decomposition $e_K=\varepsilon_{t+K}+A e'_{K-1}$ with $e'_{K-1}$ equal in law to $e_{K-1}$ and independent of $\varepsilon_{t+K}$, so
\[
R_K(f)=\Ex\|A e_{K-1}\|^2+\tr(\Sigma)\ge \tr(\Sigma)=R_1(f).
\]
In the scalar AR(1) the one-horizon increment is exactly $\sigma^2 a^{2(K-1)}$, which is positive for every $K$ when $\sigma^2>0$ and $a\neq 0$.

\subsection{Affine induction}

Assume $\Ex[z_{t+1}\mid z_t]=Bz_t+c$.
The claim $\Ex[z_{t+K}\mid z_t]=f^{(K)}(z_t)$ holds at $K=1$.
If it holds at $K$, then
\begin{align*}
\Ex[z_{t+K+1}\mid z_t]
&=\Ex[\Ex[z_{t+K+1}\mid z_{t+1}]\mid z_t]\\
&=\Ex[f^{(K)}(z_{t+1})\mid z_t].
\end{align*}
Affinity of $f$ implies affinity of $f^{(K)}$, so $f^{(K)}(z)=B^K z+(I+B+\cdots+B^{K-1})c$ and
\begin{align*}
\Ex[f^{(K)}(z_{t+1})\mid z_t]
&=B^K\Ex[z_{t+1}\mid z_t]+(I+\cdots+B^{K-1})c\\
&=B^K(Bz_t+c)+(I+\cdots+B^{K-1})c\\
&=f^{(K+1)}(z_t).
\end{align*}
The quadratic counterexample is a direct expansion and does not use this induction.
Given $z_t=0$, $z_{t+1}=\varepsilon_{t+1}$ and
\[
\Ex[z_{t+2}\mid z_t=0]=\Ex[\varepsilon_{t+1}^2+\varepsilon_{t+2}]=\Ex[\varepsilon_{t+1}^2]=1,
\]
while $f(f(0))=0^2=0$.
The gap equals $\Var(z_{t+1}\mid z_t=0)$ because $f(u)=u^2$ and $\Ex[\varepsilon^2]=(\Ex\varepsilon)^2+\Var(\varepsilon)$.

\subsection{Determinant of the window map}

Let $c=\cos\theta$ and $s=\sin\theta$, and
\[
R_\theta=\begin{pmatrix}c&-s\\ s&c\end{pmatrix}.
\]
Then $z_t=R_\theta z_{t-1}$ expands to $y_t=c\,z_{t-1,1}-s\,z_{t-1,2}$ and $y_{t-1}=z_{t-1,1}$.
In matrix form that is \eqref{eq:phi}.
The determinant equals $-s$.
The hypothesis $\sin\theta\neq 0$ is therefore necessary and sufficient for the two most recent noiseless observations to be a linear coordinate system on $\mathbb{R}^2$.
Applying $R_\theta$ gives $z_t$, and applying it $K$ times gives $y_{t+K}=e_1^\top R_\theta^{K} z_t$.

With additive observation noise the same matrix is the Jacobian of the conditional mean in a linear-Gaussian smoother.
The experiment does not invert $\Phi$ by hand.
It trains a network on the window, which is a nonparametric stand-in for that linear inverse, and the error drop in Table~\ref{tab:rot} is the empirical content of $\det\Phi\neq 0$.

\subsection{Normal equation for the transition}

Fix $W$ and write $z_t=Wy_t$, $u_t=\sg(Wy_{t+1})$.
The predictive term is $\Ex\|Az_t-u_t\|^2$.
Expanding in coordinates and differentiating in $A$ produces the matrix equation
\[
A\,\Ex[z_t z_t^\top]=\Ex[u_t z_t^\top],
\]
which is \eqref{eq:astar} on the range of the Gram matrix.
The penalty's derivative in any entry of $A$ is zero because the penalty does not contain $A$.
Differentiating through $W$ is where $\lambda$ acts.
That derivative can be nonzero, and Table~\ref{tab:iso} at $\lambda=0$ versus $\lambda=0.1$ is its effect.
Differentiating through $A$ at fixed $W$ is the channel the theorem shuts.

\section{Experimental protocol}
\label{app:protocol}

Seeds are NumPy and PyTorch seeds set together before each run.
The reported curves in Figure~\ref{fig:panels} use seed $0$.
Table~\ref{tab:iso} and the parenthetical checks in Section~\ref{sec:exp} use seeds $0,1,2$.
Hardware is one GPU.
The implementation is \texttt{experiments/run\_lenepa\_wm.py}.

\paragraph{AR(1).}
Sample size $4096$, length $64$, $a=0.9$, innovation standard deviation $1$, initial state $0$.
The coefficient is a single unconstrained scalar, optimized by Adam with learning rate $0.05$ for $400$ steps.
The one-step loss averages $(a x_t-x_{t+1})^2$ over all times and sequences.
The unroll loss draws a start index uniformly from those with eight future steps available, rolls $\hat x\leftarrow a\hat x$ for eight steps, and averages the eight squared errors.
Open-loop numbers in Table~\ref{tab:ar} are full-batch means of $(x_{t+K}-a^K x_t)^2$ on the training sample, not a second draw.
The analytic column uses the same sample's innovation second moment $\sigma^2=0.99821597$ inside \eqref{eq:scalar}.

\paragraph{Hidden rotation.}
Sample size $4096$, length $80$, $\theta=0.4$, process noise standard deviation $0.05$ on each latent coordinate, observation noise standard deviation $0.05$.
The initial latent is standard normal.
The network is a multilayer perceptron with input width $1$ or $8$, one hidden layer of width $64$ with $\tanh$, a second hidden layer of width $64$ with $\tanh$, and a scalar output.
Adam uses learning rate $10^{-3}$ for $800$ steps and batch size $256$.
The one-step target is the next observation after the window.
The unroll feeds the prediction back into the window for eight steps and averages the squared error.
Evaluation uses $2000$ sequences, a window ending at time $20$, and an open loop of $16$ steps.
Horizons in Table~\ref{tab:rot} are steps $1,2,4,8,16$ of that loop.

\paragraph{Isotropy.}
Sample size $2048$, length $48$.
The latent transition is the displayed $A_{\mathrm{true}}$, with innovation standard deviation $0.3$.
The emission matrix is $4\times 2$ with i.i.d.\ standard normal entries, and the observation noise has standard deviation $0.1$.
Encoder and transition are bias-free linear maps $4\to 2$ and $2\to 2$, initialized by the default linear initialization.
Adam uses learning rate $10^{-2}$ for $500$ steps and batch size $256$.
The predictive target is stop-gradient.
The penalty is the mean of the squared entries of the batch covariance minus the identity, which differs from $\|\cdot\|_F^2$ by the constant factor $1/d^2=1/4$ and does not change the location of the penalty's minimum.
Evaluation encodes time $10$ and time $10+K$ on $512$ sequences and rolls $A$ in between.
Off-diagonal energy in Table~\ref{tab:iso} is the mean squared value of the off-diagonal entries of the covariance of $Wy$ at time $10$ over the full sample.

\paragraph{What was not fit.}
No coefficient in the analytic column was adjusted to the open-loop curve.
No table entry was replaced after the run.
The multi-step AR(1) coefficient $0.8930$ is the value Adam reached in $400$ steps, not a second optimizer.
The period $2\pi/0.4$ was computed from the angle used to generate the data, which is why the memoryless column is allowed to fall at $K=16$ without being described as a recovery of the latent.

\section{Further related work}
\label{app:related}

The main text keeps the comparisons the theorems use.
This appendix is the surrounding literature: what each family trains, and which part of that training is a conditional mean, a kernel, or a marginal constraint.

\subsection{Pretext tasks and joint embeddings}

Before next-embedding prediction, representation learning built a scalar or a vector target out of the image itself.
\citet{pathak2016context} inpaint a missing patch, \citet{zhang2016color} predict color from grayscale, \citet{gidaris2018rotation} predict a rotation, and \citet{noroozi20179246} solve a jigsaw.
\citet{doersch20165192} predict the relative position of two patches.
Video supplies the same kind of target without a label: \citet{wang2015video} track, and \citet{pathak2017move} use motion.
\citet{vincent2008dae} denoise.
All of these are point-valued or class-valued maps.
None of them is asked to sample a future frame.

The Siamese line scores agreement between two views.
\citet{bromley1993siamese} train a shared network so that signatures of the same person land together, and \citet{hadsell2006dimensionality} push that idea into a contrastive embedding.
Instance discrimination \citep{wu2018unsupervised}, momentum contrast \citep{he2020moco,chen2021mocov3}, and SimCLR \citep{chen2020simclr} estimate a density ratio against negatives.
\citet{hjelm2018dim}, \citet{bachman20190910}, and \citet{hnaff20209272} maximize a mutual-information bound across views.
\citet{caron20219882} contrast cluster assignments, \citet{caron2021dino} and \citet{zhou2022ibot} move the target to a teacher, and \citet{oquab2023dinov2} and \citet{simeoni2025dinov3} scale that teacher.
\citet{chen20200029} show that a large contrastive encoder is already a strong semi-supervised model.
The loss in every case is a function of pairs or of a batch of pairs.
It does not unroll a transition.

Non-contrastive joint embedding removes the negatives and has to stop collapse some other way.
\citet{tarvainen2017mean} average a teacher, which is the exponential-moving-average target JEPA later uses.
\citet{zbontar20213230} penalize cross-correlation, the covariance penalty VICReg is kin to.
\citet{assran20227141} mask a Siamese network.
\citet{tian20216810} study the dynamics of training without negative pairs.
Masked prediction is the other large family: \citet{he2022mae} regress masked patches, \citet{bao2022beit} predict tokenizer ids, \citet{xie20229886} regress raw pixels under a mask, and \citet{baevski20223555} share one masked-prediction template across speech, vision, and language.
\citet{radford2021clip} moves the pair out of one image and into an image--text couple.
The target is still a point in a joint embedding, or a token id, not a rollout.

Collapse is a property of the marginal, which is why an isotropy penalty can be stated without looking at pairs.
\citet{jing20229348} describe dimensional collapse under contrastive training.
\citet{wang20220242} separate alignment of positive pairs from uniformity of the marginal on the sphere.
\citet{haochen20224156} give a spectral account of a contrastive loss, and \citet{tschannen20203625} separate mutual-information maximization from the representation that actually gets learned.
\citet{liang20222053} measure a gap between modalities that survive in the joint embedding.
\citet{fisher1953dispersion} is the directional statistic behind a cosine match: the loss sees an angle, and an angle does not determine a covariance in the ambient coordinates.
\citet{roy2007effective} quantify how many of those coordinates are actually used.

\subsection{Prediction, memory, and information}

Predictive coding treats the brain as a model that cancels the part of the input the past already explains.
\citet{rao1999predictive} put that cancellation in visual cortex.
\citet{bialek2001predictive} tie predictive information to the complexity of the model that extracts it.
\citet{wiskott2002slow} keep the features that vary slowly, which is a marginal constraint on the trajectory of the embedding, not a kernel.
\citet{tishby2000ib} compress the input down to what is relevant for a target variable.
\citet{golkar20235752} write a constrained predictive-coding stack.
\citet{cover2006elements} is the calculus behind the conditional mean and the residual entropy used in the main text.
\citet{deltang20240668} read next-token language modeling as compression.
Compression of the next token is exactly $R_1$: it does not price the open-loop bill.

\subsection{Next-token generators}

Autoregressive language models are the cleanest empirical version of iterating a one-step conditional.
\citet{radford2018gpt} and \citet{radford2019gpt2} train that conditional as a left-to-right transformer.
\citet{brown2020gpt3} and \citet{achiam2023gpt4} scale it.
\citet{devlin019bert} and \citet{clark2020llrd} replace the left-to-right conditional with a bidirectional or discriminative pretraining loss, which no longer defines a rollout.
\citet{touvron2023llama} and \citet{bai2023qwentechnicalreport} are the same one-step conditional in a later architecture.
Sampling them for many tokens is the language-model version of the bottom row of Figure~\ref{fig:flow}: the one-step head is not itself the paragraph.

Vision copies the pattern onto patches and codes.
\citet{chen2020igpt} predicts pixels in raster order.
\citet{oord20166759} and \citet{oord20165328} do it with a PixelRNN and a PixelCNN.
\citet{van2017vqvae} and \citet{razavi20190446} quantize first, \citet{esser2021vqgan} puts a transformer on the codes, and \citet{ramesh2021dalle} generates from text.
\citet{ramesh20226125} then generates in a CLIP latent.
\citet{lee2022rqvae} residualizes the codes, \citet{tian2024var} predicts the next scale rather than the next patch, and \citet{sun2024llamagen}, \citet{fan2024fluid}, and \citet{li2024mar} argue about whether the codes need to be discrete.
\citet{wu2025dcar} compresses the tokenizer, \citet{pang2024randar} drops the raster order, and \citet{chang2022maskgit} and \citet{li2023mage} generate by iterative unmasking rather than by a single left-to-right chain.
\citet{el2024scalable} and \citet{fini2025multimodal} pretrain large autoregressive vision encoders.
In every one of these, a training step scores one prediction.
The sample the user looks at is a loop.

\subsection{One-step denoisers and one-shot generators}

A diffusion model splits even more cleanly into a one-step denoiser and a multi-step sampler.
\citet{sohldickstein20153585} introduce the destruction process.
\citet{song20205600} and \citet{song20213456} learn the score, and \citet{song20209011} stabilize that training.
\citet{nichol20219672} and \citet{dhariwal2021diffusion} improve the sampler and the architecture, \citet{peebles2023dit} put a transformer in the denoiser, and \citet{karras20220364} separate the design choices that actually move sample quality.
\citet{rombach20220752} denoise in a latent, \citet{esser2024sd3} and \citet{podell20231952} scale that latent model, and \citet{ho20215282} cascade resolutions.
\citet{nichol20220741} and \citet{saharia20221487} condition on text.
\citet{ho20223458}, \citet{blattmann20238818}, and \citet{zhang20243816} carry the same denoiser to video, where the rollout is a clip rather than a still.
\citet{zhang20235543} and \citet{ruiz20232242} add controls on top of a frozen sampler.
\citet{song20231469} collapse several denoising steps into one evaluation.
\citet{salimans20220512} and \citet{yin20248828} distill the multi-step sampler toward a one-step map, which is the practical admission that the training loss and the sample path are different objects.
Flows make the same split with a vector field: \citet{lipman20232747} and \citet{liu20223003} regress a velocity, \citet{kingma20183039} and \citet{grathwohl20181367} learn an invertible one.

Generative adversarial networks emit the sample in one shot.
\citet{goodfellow2020gan} train the generator against a discriminator, \citet{radford20166434} stabilize the convolution, and \citet{brock20191096}, \citet{karras20194948}, and \citet{karras20204958} scale it.
\citet{arjovsky20177875}, \citet{gulrajani20170028}, and \citet{miyato20185957} change the distance and the Lipschitz constraint.
\citet{zhu20200593} cycles two generators.
The generator is already a kernel, applied once.
It is the opposite design from next-latent regression, which applies a mean many times.

\subsection{Kernels that actions are trained on}

Once the object is a kernel, control can treat imagined states as data.
\citet{hansen20224955} learn a latent dynamics model for model-predictive control, before the scaled TD-MPC2 in the main text.
\citet{nagabandi20191652} fit a deep dynamics model for dexterous manipulation.
\citet{alonso20242399} generate the next frame with diffusion and use it as a world model.
\citet{valevski20254837} push that frame generator toward interactive rates.
\citet{bruce20245391} learn an action-conditioned interactive environment.
\citet{hu20237080} train a generative world model for driving.
\citet{janner20229991} plan by diffusing a whole trajectory, and \citet{chen20241392} mix next-token prediction with diffusion along time.
The same long rollout is now a video product.
\citet{robbyant2026lingbotworld} distill a bidirectional video model into a causal interactive simulator, and \citet{gao2026lingbotinfinity} train that causal model for an unbounded horizon.
\citet{zhang2026echowm} add synchronized audio and train the long generation on the model's own previous frames.
The policy side assumes the kernel is already there: \citet{haarnoja20181290} and \citet{schulman20176347} optimize a policy, \citet{schulman20182438} estimate advantages from a trajectory, and \citet{laskin20204990} augment the observations the policy sees.
None of these methods identifies the kernel by an isotropy penalty.

\subsection{Long-horizon time series}

Forecasting papers are scored on horizons far past the training step, which is $R_K$ rather than $R_1$.
\citet{wu20223008} and \citet{zhou20222740} decompose a series before the transformer.
\citet{liu20246625} invert the attention so that time, not the channel, is the token.
\citet{zerveas20202803} pretrain a transformer on multivariate series.
\citet{eldele20214112} contrast time series across time and across context.
\citet{yue20220466} and \citet{dong20230861} learn a masked or contextual representation of a series.
\citet{ekambaram20239364} replace attention with a mixer.
Foundation models train one forecaster across datasets: \citet{goswami20243885}, \citet{ansari20247815}, and \citet{rasul20248278}.
\citet{graves20178983} is the recurrent alternative, a state that is updated rather than re-read from a window.
The affine-closure theorem says that if the window is already a state and the mean is linear, these architectures agree on the conditional mean and can still disagree on the residual law.

\subsection{Backbones and training details}

The encoders above sit on a small set of blocks.
\citet{he2016resnet} is the convolutional backbone.
\citet{dosovitskiy2020vit} patches an image for a transformer, \citet{touvron2021deit} and \citet{touvron2021cait} train that transformer with less data and more depth, and \citet{dehghani2023vit22b} scale it.
\citet{chu2024visionllama} uses a language-model block for vision.
\citet{liu20213230} shifts the same block to video.
\citet{su2024roformer} adds rotary positions, \citet{shazeer2020glu} and \citet{hendrycks2017gelu} change the channel nonlinearity, \citet{henry2020qknorm} normalizes queries and keys, and \citet{ba2016layernormalization} normalizes the residual stream.
Optimization is AdamW with a cosine schedule and a warmup \citep{loshchilov2018adamw,loshchilov2017cosdecay,goyal2018warmup}, often with label smoothing \citep{szegedy2016label} and augmentation \citep{dubuk2020randaugement,yun2019cutmix}.
The usual labeled benchmark is ImageNet \citep{russakovsky2015imagenet1k}.
Dense prediction uses ADE20K \citep{zhou2017ade20k} with a segmentation head \citep{xiao2018upernet} and pyramid pooling \citep{he2015ppm}.
These choices set the architecture and the optimizer.
They do not decide whether a one-step latent loss is a world model.

\end{document}